\def\eqref#1{equation~\ref{#1}}
\def\1{\bm{1}}
\DeclareMathAlphabet{\mathsfit}{\encodingdefault}{\sfdefault}{m}{sl}
\SetMathAlphabet{\mathsfit}{bold}{\encodingdefault}{\sfdefault}{bx}{n}
\DeclareMathOperator{\Tr}{Tr}
\definecolor{citecolor}{RGB}{188,132,60}
\newtheorem{proposition}{Proposition}[section]
\def\hide#1{}
\DeclareMathOperator{\tr}{Tr}
\DeclareMathOperator{\diag}{diag}
\DeclareMathOperator{\concat}{concat}
\DeclareMathOperator{\proj}{Proj}
\title{Verlet Flows: Exact-Likelihood Integrators for Flow-Based Generative Models}
\author{Ezra Erives, Bowen Jing, Tommi Jaakkola\\
CSAIL, Massachusetts Institute of Technology\\
\texttt{\{erives,bjing\}@mit.edu,\,tommi@csail.mit.edu} \\
}
\begin{document}

\maketitle

\begin{abstract}
Approximations in computing model likelihoods with continuous normalizing flows (CNFs) hinder the use of these models for importance sampling of Boltzmann distributions, where exact likelihoods are required. In this work, we present \textit{Verlet flows}, a class of CNFs on an augmented state-space inspired by symplectic integrators from Hamiltonian dynamics. When used with carefully constructed \emph{Taylor-Verlet integrators}, Verlet flows provide exact-likelihood generative models which generalize coupled flow architectures from a non-continuous setting while imposing minimal expressivity constraints. On experiments over toy densities, we demonstrate that the variance of the commonly used Hutchinson trace estimator is unsuitable for importance sampling, whereas Verlet flows perform comparably to full autograd trace computations while being significantly faster.
\end{abstract}

\section{Introduction}
Flow-based generative models---also called \emph{normalizing flows}---parameterize maps from prior to data distributions via invertible transformations. An exciting application of normalizing flows is in learning the Boltzmann distributions of physical systems \citep{noe2019boltzmann, se3_equivariant_augmented, kim2024scalable}. At inference time, these \emph{Boltzmann generators} provide model likelihoods which can be used to reweigh samples towards the target energy with importance sampling. While nearly all existing Boltzmann generators are built from composing invertible layers such as coupling layers or splines, experiments on image domains suggest that \textit{continuous} normalizing flows (CNFs)---which can parameterize arbitrary vector fields mapping noise to data---are far more expressive than their discrete counterparts \citep{neural,ffjord}. Unfortunately, the exact model likelihood of CNFs can only be accessed through expensive trace computations and numerical integration, preventing their adoption in Boltzmann generators.

In this work, we propose \textit{Verlet flows}, a flexible class of CNFs on an augmented state-space inspired by symplectic integrators from Hamiltonian dynamics. Instead of parameterizing the flow $\gamma$ with a single neural network, Verlet flows instead parameterize the coefficients of the multivariate Taylor expansions of $\gamma$ in both the state-space and the augmenting space. We then introduce \textit{Taylor-Verlet integrators}, which exploit the splitting approximation from which many symplectic integrators are derived to approximate the intractable time evolution of $\gamma$ as the composition of the tractable time evolutions of the Taylor expansion terms. At training time, Verlet flows are a subclass of CNFs, and can be trained accordingly. At inference time, \textit{Taylor-Verlet integration} enables theoretically-sound importance sampling with exact likelihoods. 

\hide{We then introduce \emph{Taylor-Verlet integrators}, which exploit the \textit{splitting approximation} from which many symplectic integrators are derived to approximate the intractable time evolution of $\gamma$ as the composition of the tractable time evolutions of the Taylor expansion terms. At inference time, Taylor-Verlet integrators this enables theoretically-sound importance sampling with exact likelihoods and with the greater expressivity of CNFs.}
\section{Background}
\paragraph{Discrete Normalizing Flows}
Given a source distribution $\pi_0$ and target distribution $\pi_1$, we wish to learn an invertible, bijective transformation $f_\theta$ which maps $\pi_0$ to $\pi_1$. Discrete normalizing flows parameterize $f_\theta$ as the composition $f_\theta = f^N_\theta \circ \dots \circ f^i_\theta$, from which $\log\pi_1(f_\theta(x))$ can be computed using the change of variables formula and the log-determinants of the Jacobians of the individual transformations $f^i_\theta$. Thus, significant effort has been dedicated to developing expressive, invertible building blocks $f_\theta^i$ whose Jacobians have tractable log-determinant. Successful approaches include \textit{coupling-based} flows, in which the dimensions of the state variable $x$ are partitioned in two, and the each half is used in turn to update the other half \citep{realnvp, dinh2014nice,müller2019neural,durkan2019neural}, and \textit{autoregressive} flows \citep{iaf, maf}. Despite these efforts, discrete normalizing flows have been shown to suffer from a lack of expressivity in practice.
\paragraph{Continuous Normalizing Flows}
\label{sec:cnf}
Continuous normalizing flows (CNFs) dispense with the discrete layers of normalizing flows and instead learn a time-dependent vector field $\gamma(x,t;\theta)$, parameterized by a neural network, which maps the source $\pi_0$ to a target distribution $\pi_1$ \citep{neural,ffjord}. Model densities can be accessed by the \textit{continuous-time} change of variables formula given by
\begin{equation}
    \log \pi_1(x_1) = \log \pi_0(x_0) - \int_0^1 \Tr J_\gamma(x_t,t;\theta)\,dt, \label{cnf_density}
\end{equation}
where $x_t=x_0 + \int_0^t \gamma(x_t,t;\theta)\,dt$, $\Tr$ denotes trace, and $J_\gamma(x_t,t;\theta) = \frac{\partial \gamma(x,t;\theta)}{\partial x} |_{x_t,t}$ denotes the Jacobian. Compared to discrete normalizing flows, CNFs are not constrained by invertibility or the need for a tractable Jacobian, and therefore enjoy significantly greater expressivity.

While the trace $\Tr J_\gamma(x_t,t;\theta)$ appearing in the integrand of Equation \ref{cnf_density} can be evaluated exactly with automatic differentiation, this grows prohibitively expensive as the dimensionality of the data grows large, as a linear number of backward-passes are required. In practice, the Hutchinson trace estimator \citep{ffjord} is used to provide a linear-time, unbiased estimator of the trace. While cheaper, the variance of the Hutchinson estimator makes it unsuitable for importance sampling.

\paragraph{Symplectic Integrators and the Splitting Approximation}
Leap-frog integration is a numeric method for integrating Newton's equations of motion which involves alternatively updating $q$ (position) and $p$ (velocity) in an invertible manner not unlike augmented, coupled normalizing flows.\footnote{Closely related to leap-frog integration is \textit{Verlet integration}, from which our method derives its name.} Leap-frog integration is a special case of the more general family of \textit{symplectic integrators}, designed for the Hamiltonian flow $\gamma_H$ (of which the equations of motion are a special case). Oftentimes the Hamiltonian flow decomposes as $\gamma_H = \gamma_q + \gamma_p$, enabling the \textit{splitting approximation}
\begin{equation}
    \varphi(\gamma_H, \tau) \approx \varphi(\gamma_q, \tau) \circ \varphi(\gamma_p, \tau) \label{eq:symplectic_splitting}
\end{equation}
where $\varphi(\gamma, \tau)$ denotes the time evolution operator along the flow $\gamma$ for a duration $\tau$, and where the terms on the right-hand side of Equation \ref{eq:symplectic_splitting} are possibly tractable in a way that the left-hand side is not. For example, the leap-frog integrator corresponds to analytic, invertible, and volume-preserving $\varphi(\gamma_{\{q,p\}}, t)$, whereas the original evolution may satisfy none of these properties. While Verlet flows, to be introduced in the next section, are not in general Hamiltonian, they similarly exploit the splitting approximation. A more detailed exposition of symplectic integrators and the splitting approximation can be found in Appendix \ref{sec:symplectic}.

\section{Methods}
\subsection{Verlet Flows}
We consider the problem of mapping a source distribution $\tilde{\pi}_0(q)$ on $\mathbb{R}^{d_q}$ at time $t=0$ to a target distribution $\tilde{\pi}_1(q)$ on ($\mathbb{R}^{d_q}$) at time $t=1$ by means of a time-dependent flow $\gamma(x,t)$. We will now augment this problem on the configuration-space $\mathbb{R}^{d_q}$ by extending the distribution $\tilde{\pi}_0(q)$ to $\pi_0(q,p) = \pi_0(p|q)\tilde{\pi}_0(q)$ and $\tilde{\pi}_1(q)$ to $\pi_1(q,p) = \pi_1(p|q)\tilde{\pi}_1(q)$ where both $\pi_i(p|q)$ are given by $\mathcal{N}(p; 0, I_{d_p})$. In analogy with Hamiltonian dynamics, we will refer to the space $M= \mathbb{R}^{d_q + d_p}$ as \textit{phase space}.\footnote{Note that we do not require that $d_q = d_p$.}

Observe that any analytic flow $\gamma$ is given (at least locally) by a multivariate Taylor expansion of the form
\begin{equation}
    \gamma(x,t) = \frac{d}{d t}\begin{bmatrix} q \\ p \end{bmatrix} = \begin{bmatrix} \gamma^q(q, p, t) \\ \gamma^p(q, p, t)    \end{bmatrix} = \begin{bmatrix} s_0^q(p,t) + s_1^q(p,t)^Tq + \cdots \\ s_0^p(q,t) + s_1^p(q,t)^Tp + \cdots \end{bmatrix} = \begin{bmatrix} \sum_{k=0}^\infty s_k^q(p,t)(q^{\otimes k}) \\  \sum_{k=0}^\infty s_k^p(q,t)(p^{\otimes k}) \end{bmatrix}
    \label{eq:proto_verlet}
\end{equation}
for appropriate choices of functions $s_i^q$ and $s_i^p$, which we have identified in the last equality as $(i,1)$-tensors: multilinear maps which take in $i$ copies of $q \in T_q \mathbb{R}^n$ and return a tangent vector. While $s_0^{\{q,p\}}$ and $s_1^{\{q,p\}}$ can be thought of as vectors and matrices respectively, higher order terms do not admit particularly intuitive interpretations. Whereas traditional CNFs commonly parameterize $\gamma_\theta$ directly via a neural network, \textit{Verlet flows} instead parameterize the coefficients $s_k^{\{q,p\}; \theta}$ with neural networks, allowing for Verlet integration via the splitting approximation.\hide{\footnote{An important exception are score-based models (e.g. diffusion) which instead parameterize the score via $s_\theta(x,\sigma;\theta) \approx \nabla_x \log p_\sigma(x)$.}} By parameterizing all the terms in the Taylor expansion, Verlet flows are in theory as expressive as CNFs parameterized as $\gamma(q,p,t;\theta)$. However, in practice,we must truncate the series after some finite number of terms, yielding the order $N$ Verlet flow
\begin{equation}
\gamma_N(x,t;\theta) \coloneqq \begin{bmatrix} \sum_{k=0}^N s_k^q(p,t; \theta)(q^{\otimes k}) \\  \sum_{k=0}^N s_k^p(q,t; \theta)(p^{\otimes k}) \end{bmatrix}.
\label{eq:verlet}
\end{equation}
In the next section, we examine how to obtain exact likelihoods from these truncated Verlet flows.

\subsection{Taylor-Verlet Integrators}
Denote by $\gamma_k^q$ the flow given by
\[\gamma_k^q(x,t; \theta) = \begin{bmatrix}s_k^q(p,t; \theta)(q^{\otimes k}) \\ 0 \end{bmatrix} \in T_x M,\] and define $\gamma_k^p$ similarly.\footnote{When there is no risk of ambiguity, we drop the subscript and refer to $\gamma_N$ simply by $\gamma$.}
For any such flow $\gamma'$ on $M$, denote by $\varphi^\ddagger(\gamma', \tau)$ the \textit{time evolution operator}, transporting a point $x \in M$ along the flow $\gamma'$ for time $\tau$. We denote by just $\varphi$ the \textit{pseudo} time evolution operator given by $\varphi(\gamma', \tau): x_t \to x_t + \int_t^{t+\tau} \gamma'(x_s, t)\, ds$.\footnote{Justification for use of the pseudo time evolution operator $\varphi$ can be found in Appendix \ref{sec:time_evo}.} Note that $t$ is kept constant throughout integration, an intentional choice which we shall see allows for a tractable closed form. Although our Verlet flows are not Hamiltonian, the splitting approximation from Equation \ref{eq_splitting} can be applied to Verlet flows to decompose the desired time evolution into simpler, analytic terms, yielding
\begin{equation}\varphi^{\ddagger}(\gamma, \tau) \approx \varphi(\gamma_t, \tau) \circ \varphi(\gamma^p_N, \tau)\circ \varphi(\gamma^q_N, \tau) \circ \varphi(\gamma^p_{N-1}, \tau)\circ \varphi(\gamma^q_{N-1}, \tau)\cdots \varphi(\gamma^p_0, \tau)\circ \varphi(\gamma^q_0, \tau).\label{eq_approx}\end{equation}
Note here that the leftmost term of the right hand side is the time-update term $\varphi(\gamma_t, \tau)$. The key idea is that Equation \ref{eq_approx} \textbf{approximates the generally intractable $\varphi^{\ddagger}(\gamma, \tau)$ as a composition of simpler, tractable updates allowing for a closed-form, exact-likelihood integrator for Verlet flows}.

The splitting approximation from Equation \ref{eq_approx}, together with  closed-form expressions for the time evolution operators and their log density updates (see Figure \ref{fig:time_evo_table}), yields an integration scheme specifically tailored for Verlet flows, and which we shall refer to as a \textit{Taylor-Verlet integrator}. Explicit integrators for first order and higher order Verlet flows are presented in Appendix \ref{sec:higher_order_algo}. One important element of the design space of Taylor-Verlet integration is the order of the terms within the splitting approximation of Equation \ref{eq_approx}, and consequently, the order of updates performed during Verlet integration. We will refer to Taylor-Verlet integrators which follow the order of Equation \ref{eq_approx} as \textit{standard} Taylor-Verlet integrators, and others as non-standard. While the remainder of this work focuses on standard Taylor-Verlet integrators, the space of non-standard Taylor-Verlet integrators is rich and requires further exploration. Certain coupling-based normalizing flow architectures, such as RealNVP \citep{realnvp} can be realized as the update steps of non-standard Taylor-Verlet integrators, as is discussed in Appendix \ref{sec:generalized_couplings}.
\subsection{Closed Form and Density Updates for Time Evolution Operators}
\label{sec:closed_form}
\begin{table}[h!]
\begin{footnotesize}
    \centering
    \caption{A summary of closed-forms for the time evolution operators $\varphi(\gamma_k^q;\tau)$, and their corresponding log density updates. Analogous results hold for for $\varphi(\gamma_k^p;\tau)$ as well.}
    \label{fig:time_evo_table}
    \begin{tabular}{| c | c| c | } 
\hline
  Flow $\gamma$ & Operator $\varphi(\gamma, \tau)$ & Density Update $\log \det |J\varphi(\gamma, \tau)|$ \\
  \hline
  $\gamma_0^q$ & $\begin{bmatrix} q \\ p  \end{bmatrix} \to \begin{bmatrix} q + \tau s_0^q(p,t)\\ p \end{bmatrix}$ & $0$ \\ 
  \hline
   $\gamma_1^q$ & $\begin{bmatrix} q \\ p\end{bmatrix} \to \begin{bmatrix} \exp(\tau s_1^q(p,t))q\\ p \end{bmatrix}$ & $\tr(\tau s_1^q(p,t))$ \\ 
  \hline
   $\overline{\gamma}_k^q, k > 1$ & $\begin{bmatrix} q \\ p  \end{bmatrix} \to \begin{bmatrix} (q^{\circ (1-k)} +\tau (\overline{s}_k^q)_i(1-k))^{\circ\left(\frac{1}{1-k}\right)}\\ p \end{bmatrix}$ & $\sum_i \frac{k}{1-k}\log \left|q_i^{1-k} + \tau (1-k)(\overline{s}_k^q)_i\right| - k\log |q_i|$ \\ 
  \hline
\end{tabular}

\end{footnotesize}
\end{table}

For each pseudo time evolution operator $\varphi(\gamma_{\{q,p\}}^k, \tau)$, we compute its closed-form and the log-determinant of its Jacobian. Together, these allow us to implement the integrator given by Equation \ref{eq_approx}. Results are summarized in the Table \ref{fig:time_evo_table} for $\gamma_k^q$ only, but analogous results hold for for $\gamma_k^p$ as well. Note that for terms of order $k \ge 2$, and for the sake of tractability, we restrict our attention to \textit{sparse} tensors, denoted $\overline{s_k}^{\{q,p\}}$, for which only ``on-diagonal" terms are non-zero so that $\overline{s_k}^{\{q,p\}}(q^{\otimes k})$ collapses to a simple dot product. We similarly use $\overline{\gamma}_k^{\{q,p\}}$ to denote the corresponding flows for sparse, higher order terms. Full details and derivations can be found in Appendix \ref{sec:time_evo_derivations}.

\section{Experiments}
Across all experiments in this section, and unless stated otherwise, we train an order-one Verlet flow $\gamma_\theta$, with coefficients $s_{0,1}^{\{q,p\};\theta}$ parameterized as a three-layer architecture with $64$ hidden units each, as a continuous normalizing flow using likelihood-based loss. Non-Verlet integration is performed numerically using a fourth-order Runge-Kutta solver for $100$ steps.

\paragraph{Estimation of $\log Z$}
\begin{figure}[!h]
     \centering
     \begin{subfigure}[b]{0.49\textwidth}
         \centering
         \includegraphics[width=\textwidth]{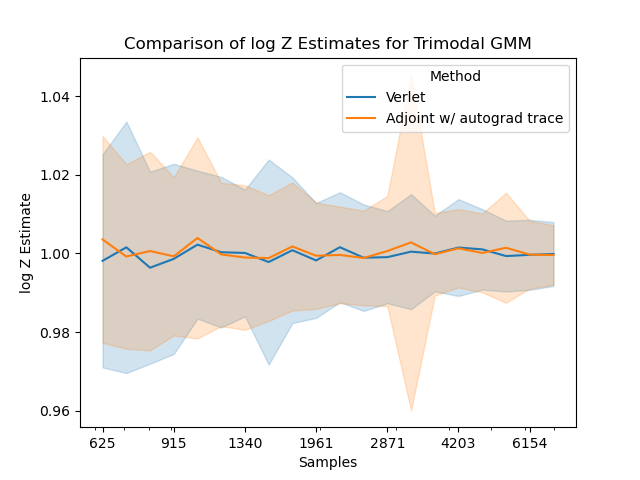}
     \end{subfigure}
     \hfill
     \begin{subfigure}[b]{0.49\textwidth}
         \centering
         \includegraphics[width=\textwidth]{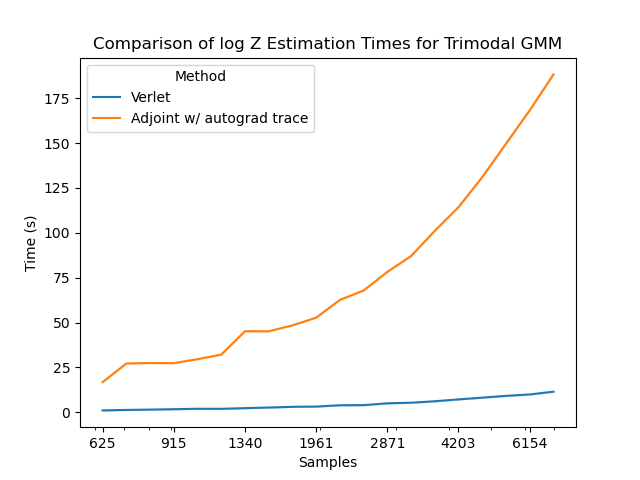}
     \end{subfigure}
        \caption{The left graph shows estimates of the natural logarithm $\log Z$ (mean $\pm$ S.D.) as a function of the number of samples. The right graph shown the time needed to make the computations in the left graph. Both graphs use $100$ integration steps.}
        \label{fig:trimodal}
\end{figure}
Given an unnormalized density $\widehat{\pi}$, a common application of importance sampling is to estimate the partition function $Z=\int \widehat{\pi}(x)\,dx$. Given a distribution $\pi_\theta$ (hopefully close to the unknown, normalized density $\pi = \frac{\widehat{\pi}}{Z}$), we obtain an unbiased estimate of $Z$ via
\begin{equation}
    \mathbb{E}_{x \sim \pi_\theta} \left[\frac{\widehat{\pi}(x)}{\pi_\theta(x)}\right] = \int_{\mathbb{R}^d} \left[\frac{\widehat{\pi}(x)}{\pi_\theta(x)}\right] \pi_\theta(x)\,dx = \int_{\mathbb{R}^d} \widehat{\pi}(x)\,dx = Z.
    \label{eq:partition_fn}
\end{equation}
We train an order-one Verlet flow $\gamma_\theta$ targeting a trimodal Gaussian mixture in two-dimensional $q$-space, and an isotropic Gaussian $\mathcal{N}(p_1; 0, I_2)$ in a two-dimensional $p$-space. We then perform and time importance sampling using Equation \ref{eq:partition_fn} to estimate the natural logarithm $\log Z$ in two ways: first numerically integrating $\gamma_\theta$ with a fourth-order Runge-Kutta solver and using automatic differentiation to exactly compute the trace, and secondly using Taylor-Verlet integration. We find that integrating $\gamma_\theta$ using a 
Taylor-Verlet integrator performs comparably to integrating numerically while being significantly faster. Results are summarized in Figure \ref{fig:trimodal}.

The poor performance of the Hutchinson trace estimator can be seen in Figure \ref{fig:hutch}, where we plot a histogram of the logarithm $\log\left[\frac{\widehat{\pi}(x)}{\pi_\theta(x)}\right]$ of the importance weights for $x \sim \pi_\theta(x)$. The presence of just a few positive outliers (to be expected given the variance of the trace estimator) skews the resulting estimate of $Z$ to be on the order of $10^{20}$ or larger.

\begin{figure}
    \centering
    \includegraphics[scale=0.35]{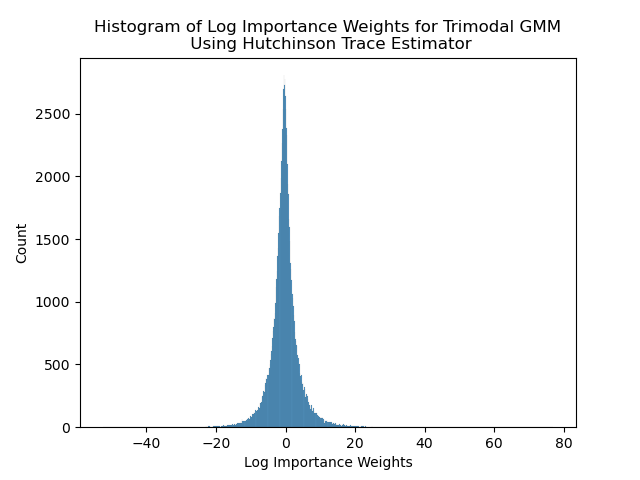}
    \caption{This histogram shows log importance weights for a trimodal GMM obtained by numerically integrating the Verlet flow $\gamma_\theta$ using the Hutchinson trace estimator for $100$ integration steps. Positive outliers render the Hutchinson trace estimator unusable for importance sampling.}
    \label{fig:hutch}
\end{figure}

\section{Conclusion}
In this work, we have presented Verlet flows, a class of CNFs in an augmented state space whose flow $\gamma_\theta$ is parameterized via the coefficients of a multivariate Taylor expansion. The splitting approximation used by many symplectic integrators is adapted to construct exact-likelihood Taylor-Verlet integrators, which enable comparable but faster performance to numeric integration using expensive, autograd-based trace computation on tasks such as importance sampling.

\section{Acknowledgements}
We thank Gabriele Corso, Xiang Fu, Peter Holderrieth, Hannes St\"ark, and Andrew Campbell for helpful feedback and discussion over the course of the project. We also thank the anonymous reviewers for their helpful feedback and suggestions.

\bibliography{iclr2024_workshop}
\bibliographystyle{iclr2024_workshop}

\appendix
\section{Hamiltonian Mechanics and Symplectic Integrators on Euclidean Space}
\label{sec:symplectic}
Given a mechanical system with configuration space $\mathbb{R}^d$, we may define the \textit{phase space} of the system to be the cotangent bundle $M = T^\ast \mathbb{R}^d \simeq \mathbb{R}^{2d}$. Intuitively, phase space captures the intuitive notion that understanding the state of $M$ at a point in time requires knowledge of both the position $q \in \mathbb{R}^d$ and the velocity, or momentum (assuming unit mass), $p \in T^\ast \mathbb{R}^d$.

\subsection{Hamiltonian Mechanics}
\textit{Hamiltonian mechanics} is a formulation of classical mechanics in which the equations of motion are given by differential equations describing the flow along level curves of an energy function, or \textit{Hamiltonian}, $\mathcal{H}(q,p)$. Denote by $\mathcal{X}(M)$ the space of smooth vector fields on $M$. Then at the point $(q,p) \in M$, the \textit{Hamiltonian flow} $\gamma_\mathcal{H} \in \mathcal{X}(M)$ is defined to be the unique vector field which satisfies
\begin{equation}
    \gamma_\mathcal{H}^T \Omega \gamma' = \nabla \mathcal{H} \cdot \gamma' \label{eq:hamiltonian}
\end{equation}
for all $\gamma' \in \mathcal{X}(M)$, and where 
\[\Omega = \begin{bmatrix} 0 & I_d \\ -I_d & 0\end{bmatrix}\]
is the \textit{symplectic form}\footnote{In our Euclidean context, a symplectic form is more generally any non-degenerate skew-symmetric bilinear form $\Omega'$ on phase space. However, it can be shown that there always exists a change of basis which satisfies $\Lambda \Omega'\Lambda^{-1}=\Omega$, where $\Lambda$ denotes the change of basis matrix. Thus, we will only consider $\Omega$.}. Equation \ref{eq:hamiltonian} implies $\gamma_\mathcal{H}^T\Omega = \nabla \mathcal{H}$, which yields 
\begin{equation}\gamma_\mathcal{H} = \begin{bmatrix} \frac{\partial \mathcal{H}}{\partial p} & -\frac{\partial \mathcal{H}}{\partial q}\end{bmatrix}^T.\end{equation} In other words, our state $(q,p)$ evolves according to $\frac{dq}{dt} = \frac{\partial \mathcal{H}}{\partial p}$ and $\frac{dp}{dt} = -\frac{\partial \mathcal{H}}{\partial q}$. 
\subsection{Properties of the Hamiltonian Flow $\gamma_\mathcal{H}$} The time evolution $\varphi^\ddagger(\gamma_\mathcal{H}, \tau)$ of $\gamma_\mathcal{H}$ satisfies two important properties: it conserves the Hamiltonian $\mathcal{H}$, and it conserves the symplectic form $\Omega$. 
\begin{proposition}
The flow $\gamma_\mathcal{H}$ conserves the Hamiltonian $\mathcal{H}$.
\end{proposition}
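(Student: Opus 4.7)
The plan is to show that $\mathcal{H}$ is constant along every integral curve of $\gamma_{\mathcal{H}}$, which by the chain rule amounts to proving that $\nabla \mathcal{H} \cdot \gamma_{\mathcal{H}} = 0$ pointwise on $M$. The cleanest route leverages the defining Equation \ref{eq:hamiltonian} together with the skew-symmetry of $\Omega$, rather than expanding out Hamilton's equations componentwise.

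First, I would recall from Equation \ref{eq:hamiltonian} that $\gamma_{\mathcal{H}}$ is characterized by the identity
\[
\gamma_{\mathcal{H}}^T \Omega\, \gamma' \;=\; \nabla \mathcal{H} \cdot \gamma' \qquad \text{for all } \gamma' \in \mathcal{X}(M).
\]
Since this holds for every test vector field $\gamma'$, I am free to specialize to $\gamma' = \gamma_{\mathcal{H}}$ itself, obtaining $\gamma_{\mathcal{H}}^T \Omega\, \gamma_{\mathcal{H}} = \nabla \mathcal{H} \cdot \gamma_{\mathcal{H}}$. The left-hand side is a scalar $v^T \Omega v$ with $\Omega^T = -\Omega$; transposing gives $v^T \Omega v = -v^T \Omega v$, hence $v^T \Omega v = 0$. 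Therefore $\nabla \mathcal{H} \cdot \gamma_{\mathcal{H}} = 0$.

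To conclude, let $(q(t), p(t))$ be any integral curve of $\gamma_{\mathcal{H}}$, so that $\frac{d}{dt}(q(t), p(t)) = \gamma_{\mathcal{H}}(q(t), p(t))$. By the chain rule,
\[
\frac{d}{dt}\, \mathcal{H}(q(t), p(t)) \;=\; \nabla \mathcal{H} \cdot \gamma_{\mathcal{H}} \;=\; 0,
\]
so $\mathcal{H}$ is constant along the curve, which is exactly the statement that $\varphi^{\ddagger}(\gamma_{\mathcal{H}}, \tau)$ preserves $\mathcal{H}$.

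There is essentially no real obstacle here: the whole proposition collapses to the observation that a skew-symmetric bilinear form vanishes on the diagonal. The only thing to be careful about is invoking Equation \ref{eq:hamiltonian} with a vector field argument rather than a fixed vector, but since the identity is pointwise in $(q,p)$ this is harmless. I would deliberately avoid the componentwise route through Hamilton's equations, since the skew-symmetry argument generalizes verbatim to symplectic manifolds and makes clear that conservation of $\mathcal{H}$ is structural, not coincidental.
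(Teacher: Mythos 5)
Your proof is correct and follows essentially the same route as the paper, which likewise reduces the claim to the identity $\nabla \mathcal{H} \cdot \gamma_{\mathcal{H}} = 0$; the paper simply asserts this identity as immediate, whereas you supply the (correct) justification via skew-symmetry of $\Omega$ applied to the defining relation with $\gamma' = \gamma_{\mathcal{H}}$.
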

\begin{proof}
    This amounts to showing that $\frac{d}{d\tau}\varphi^\ddagger(\gamma_\mathcal{H}, \tau) \vert_{\tau=0} = 0$, which follows immediately from $\nabla \mathcal{H} \cdot \gamma_\mathcal{H} = 0$. 
\end{proof}
\begin{proposition}
    The flow $\gamma_\mathcal{H}$ preserves the symplectic form $\Omega$.
    \label{prop:symplect}
\end{proposition}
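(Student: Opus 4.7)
The plan is to show that the pullback of $\Omega$ along the time evolution map is constant in $\tau$, by differentiating and showing the derivative vanishes identically. Writing $\Phi_\tau := \varphi^\ddagger(\gamma_\mathcal{H}, \tau)$ and $J(\tau) := D\Phi_\tau$ for its Jacobian, preserving the symplectic form amounts to the matrix identity $J(\tau)^T \Omega\, J(\tau) = \Omega$ for all $\tau$. At $\tau = 0$ this holds because $\Phi_0$ is the identity. Thus it suffices to show $\frac{d}{d\tau}\bigl(J^T \Omega\, J\bigr) = 0$.

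Next, since $\Phi_\tau$ is the flow of $\gamma_\mathcal{H}$, we have $\frac{d}{d\tau}\Phi_\tau = \gamma_\mathcal{H} \circ \Phi_\tau$; differentiating in the spatial variable gives the variational equation $\dot{J}(\tau) = A(\Phi_\tau)\, J(\tau)$ where $A := D\gamma_\mathcal{H}$ is the Jacobian of the Hamiltonian vector field. Substituting this into the product rule yields
\begin{equation}
\frac{d}{d\tau}\bigl(J^T \Omega\, J\bigr) = J^T \bigl(A^T \Omega + \Omega A\bigr) J,
\end{equation}
so the problem reduces to verifying the algebraic identity $A^T \Omega + \Omega A = 0$, i.e.\ that $\Omega A$ is symmetric.

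For this last step I would use the explicit form of the Hamiltonian flow. Rearranging Equation \ref{eq:hamiltonian} (and using $\Omega^T = -\Omega$, $\Omega^2 = -I_{2d}$) gives $\gamma_\mathcal{H} = \Omega\, \nabla \mathcal{H}$, so $A = \Omega\, \mathrm{Hess}(\mathcal{H})$. Then $\Omega A = \Omega^2\, \mathrm{Hess}(\mathcal{H}) = -\mathrm{Hess}(\mathcal{H})$, which is symmetric because the Hessian of a scalar function is symmetric. Hence $(\Omega A)^T = \Omega A$, which together with $\Omega^T = -\Omega$ gives $A^T \Omega = -\Omega A$, the desired identity.

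I expect no serious obstacle: the only subtleties are bookkeeping the sign conventions in defining $\Omega$ and in extracting $\gamma_\mathcal{H}$ from Equation \ref{eq:hamiltonian} (making sure $\Omega \nabla \mathcal{H}$ indeed reproduces the components $\partial \mathcal{H}/\partial p$ and $-\partial \mathcal{H}/\partial q$), and justifying that the variational equation $\dot{J} = A(\Phi_\tau)\, J$ really follows from differentiating the flow equation under smoothness of $\mathcal{H}$. Both are standard, so the full proof is short once these pieces are in place.
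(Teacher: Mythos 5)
Your proof is correct, but it takes a genuinely different route from the paper. The paper works entirely in the language of differential forms: it realizes $\Omega$ as the two-form $\sum_i dq_i \wedge dp_i$, invokes Cartan's magic formula $\mathcal{L}_{\gamma_\mathcal{H}}\Omega = d(\iota_{\gamma_\mathcal{H}}\Omega) + \iota_{\gamma_\mathcal{H}}d\Omega$, observes that $d\Omega = 0$, computes $\iota_{\gamma_\mathcal{H}}\Omega = d\mathcal{H}$, and concludes $\mathcal{L}_{\gamma_\mathcal{H}}\Omega = d(d\mathcal{H}) = 0$ by $d^2 = 0$. You instead use the matrix formulation $J(\tau)^T\Omega J(\tau) = \Omega$, the variational equation $\dot{J} = (D\gamma_\mathcal{H})J$, and the algebraic fact that $\Omega\,D\gamma_\mathcal{H} = -\mathrm{Hess}(\mathcal{H})$ is symmetric. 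Your approach is more elementary --- it only requires multivariable calculus and linear algebra rather than exterior calculus --- and the key mechanism (symmetry of the Hessian) is very concrete. The paper's approach is more concise and, because it is coordinate-free, generalizes immediately to symplectic manifolds beyond $\mathbb{R}^{2d}$; in particular it makes the role of $d^2 = 0$ (closedness of the symplectic form) transparent. Both arguments are standard and both are correct here. One small detail you flagged but did not carry out: extracting $\gamma_\mathcal{H} = \Omega\nabla\mathcal{H}$ from Equation \ref{eq:hamiltonian} indeed works out, since $\gamma_\mathcal{H}^T\Omega = \nabla\mathcal{H}$ together with $\Omega^T = -\Omega$ and $\Omega^{-1} = -\Omega$ gives $\gamma_\mathcal{H} = \Omega\nabla\mathcal{H}$, matching the paper's components $(\partial\mathcal{H}/\partial p,\, -\partial\mathcal{H}/\partial q)$.
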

\begin{proof}
    Realizing $\Omega$ as the (equivalent) two-form $\sum_i dq_i \wedge dp_i$, the desired result amounts to showing that the Lie derivative $\mathcal{L}_{\gamma_\mathcal{H}} \Omega = 0$. With Cartan's formula, we find that 
    \begin{align*}
        \mathcal{L}_{\gamma_\mathcal{H}} \Omega = d(\iota_{\gamma_{\mathcal{H}}}\Omega) + \iota_{\gamma_{\mathcal{H}}} d\Omega = d(\iota_{\gamma_{\mathcal{H}}}\Omega)
    \end{align*}
    where $d$ denotes the exterior derivative, and $\iota$ denotes the interior product. Here, we have used that $d\Omega = \sum_i d(dq_i \wedge dp_i) = 0$. Then we compute that
    \begin{align*}
        d(\iota_{\gamma_{\mathcal{H}}}\Omega) &= d(\iota_{\gamma_{\mathcal{H}}} \sum_i dq_i \wedge dp_i )\\
        &= d\left(\sum_i \frac{\partial \mathcal{H}}{\partial p_i} dp_i + \frac{\partial \mathcal{H}}{\partial q_i} dq_i\right) \\
        &= d(d\mathcal{H}).
    \end{align*}
    Since $d^2 = 0$, $\mathcal{L}_{\gamma_\mathcal{H}} = d(d\mathcal{H}) = 0$, as desired.
\end{proof}
Flows which preserve the symplectic form $\Omega$ are known as \textit{symplectomorphisms}. Proposition \ref{prop:symplect} implies that the time evolution of $\gamma_H$ is a symplectomorphism. 

\subsection{Symplectic Integrators and the Splitting Approximation}
We have seen that the time-evolution of $\gamma_\mathcal{H}$ is a symplectomorphism, and therefore preserves the symplectic structure on the phase space $M$. In constructing numeric integrators for $\gamma_\mathcal{H}$, it is therefore desirable that our integrators are, if possible, themselves symplectomorphisms. In many cases, the Hamiltonian $\mathcal{H}$ decomposes as the sum $\mathcal{H}(q,p) = T(q) + V(p)$. Then, at the point $z=(q,p) \in M$, we find that
\[\gamma_T = \begin{bmatrix} \frac{\partial T}{\partial p} \\ -\frac{\partial T}{\partial q} \end{bmatrix} = \begin{bmatrix} 0 \\ -\frac{\partial T}{\partial{q}} \end{bmatrix} \in T_z(\mathbb{R}^2)\] and \[\gamma_V = \begin{bmatrix} \frac{\partial V}{\partial p} \\ -\frac{\partial V}{\partial q} \end{bmatrix} = \begin{bmatrix} \frac{\partial V}{\partial{p}} \\ 0 \end{bmatrix} \in T_z(\mathbb{R}^2).\] 
Thus, the flow decomposes as well to 
\begin{align*}
    \gamma_\mathcal{H} = \begin{bmatrix} \frac{\partial \mathcal{H}}{\partial p} \\ -\frac{\partial \mathcal{H}}{\partial q}\end{bmatrix}
    = \begin{bmatrix} \frac{\partial V}{\partial p} \\ -\frac{\partial T}{\partial q}\end{bmatrix}
    = \begin{bmatrix} 0 \\ -\frac{\partial T}{\partial q}\end{bmatrix} + \begin{bmatrix} \frac{\partial \mathcal{H}}{\partial p} \\ 0\end{bmatrix}
    = \gamma_T + \gamma_V.
\end{align*}
Observe now that the respective time evolution operators are tractable and are given by 
\[\varphi^\ddagger(\gamma_T, \tau): \begin{bmatrix} q \\ p \end{bmatrix} \to \begin{bmatrix} q + \tau \frac{\partial T}{\partial p} \\ p \end{bmatrix}\]
and 
\[\varphi^\ddagger(\gamma_V, \tau):\begin{bmatrix} q \\ p \end{bmatrix} \to \begin{bmatrix} q \\ p - \tau \frac{\partial T}{\partial q} \end{bmatrix}.\]
Since $\gamma_T$ and $\gamma_V$ are Hamiltonian flows their time evolutions $\varphi^\ddagger(\gamma_T, \tau)$ and $\varphi^\ddagger(\gamma_T, \tau)$ are both symplectomorphisms. As symplectomorphisms are closed under composition, it follows that that $\varphi^\ddagger(\gamma_T, \tau)\circ \varphi^\ddagger(\gamma_V, \tau)$ is itself a symplectomorphism. We have thus arrived at the \textit{splitting approximation}
\begin{equation}
    \varphi^\ddagger(\gamma_\mathcal{H}, \tau) \approx \varphi^\ddagger(\gamma_T, \tau)\circ \varphi^\ddagger(\gamma_V, \tau)
    \label{eq:appendix_splitting}.
\end{equation}
Equation \ref{eq:appendix_splitting} allows us to approximate the generally intractable, symplectic time evolution $\varphi^\ddagger(\gamma_\mathcal{H}, \tau)$ as the symplectic composition of two simpler, tractable time evolution operators. The integration scheme given by Equation \ref{eq:appendix_splitting} is generally known as the \textit{symplectic Euler method}. 

So-called splitting methods make use of more general versions of the splitting approximation to derive higher order, symplectic integrators. Using the same decomposition $\mathcal{H}(q,p) = T(q) + V(p)$, and instead of considering the two-term approximation given by Equation \ref{eq:appendix_splitting}, we may choose coefficients $\{c_i\}_{i=0}^N$ and $\{d_i\}_{i=0}^N$ with $\sum c_i = \sum d_i = 1$ and consider the more general splitting approximation 

\begin{equation}\varphi^\ddagger(\gamma_\mathcal{H}, \tau) \approx \varphi^\ddagger(c_N\gamma_T)\circ \varphi^\ddagger(d_N\gamma_V) \circ \dots \circ \varphi^\ddagger(c_0\gamma_T)\circ \varphi^\ddagger(d_0\gamma_V).\label{eq:general_appendix_splitting}\end{equation}
A more detailed exposition of higher order symplectic integrators can be found in \citep{yoshida1993recent}.

\section{Justification for Treating $\varphi(\gamma, \tau)$'s as Time Evolution Operators}
\label{sec:time_evo}
In the following discussion, we will use $x_t = (q_t,p_t)$ for brevity. The splitting approximation from Equation \ref{eq_approx}, which we recall below as
\begin{equation}\varphi^\ddagger(\gamma, \tau) \approx \varphi(\gamma_t, \tau) \circ \varphi(\gamma^p_N, \tau)\circ \varphi(\gamma^q_N, \tau) \cdots \varphi(\gamma^p_0, \tau)\circ \varphi(\gamma^q_0, \tau).\label{eq_splitting}\end{equation}
requires some clarification. Recall that while the \textit{true} time evolution operator $\varphi^\ddagger(\gamma, \tau)$ is given by 
\begin{equation}
    \varphi^\ddagger(\gamma, \tau): \begin{bmatrix} x_t \\ t \end{bmatrix} \to \begin{bmatrix} x_t + \int_t^{t+\tau} \gamma(x_{u}, u)\,du \\ t+\tau \end{bmatrix},\label{eq_true}
\end{equation}
the pseudo time operator $\varphi(\gamma, \tau)$ is given by
\begin{equation}
    \varphi(\gamma, \tau): \begin{bmatrix} x_t \\ t \end{bmatrix} \to \begin{bmatrix} x_t + \int_t^{t+\tau} \gamma(x_{u}, t)\,du \\ t \end{bmatrix},
    \label{eq_pseudo}
\end{equation}
where $t$ is kept-constant throughout the integration. 

To make sense of the connection between $\varphi^\ddagger$ and $\varphi$, we will augment our phase-time space $\mathcal{S} = \mathbb{R}^{d_p + d_q} \times \mathbb{R}_{\ge 0}$ (within which our points $(x_t, t)$ live), with a new $s$-dimension, to obtain the space $\mathcal{S}' = \mathcal{S} \times \mathbb{R}_{\ge 0}$. Treating $x_t$ and $t$ as the state variables $x_s$ and $t_s$ which evolve with $s$, the flow $\gamma^{q}_k$ (as a representative example) on $\mathbb{R}^{d_p + d_q}$ can be extended to a flow $\widehat{\gamma}^q_k$ on $\mathcal{S}$ given by 
\begin{equation}
    \widehat{\gamma}^q_k(x_s, t_s) = \begin{bmatrix} \frac{\partial x_s}{\partial s} \\ \frac{\partial t_s}{\partial s}\end{bmatrix} = \begin{bmatrix} \gamma^q_k(x_s, t_s) \\ 0\end{bmatrix}
    \label{eq:s_flow}
\end{equation}
where the zero $t_s$-component encodes the fact that the pseudo-time evolution $\varphi(\gamma^q_k, \tau)$ from Equation \ref{eq_pseudo} does not change $t$. The big idea is then that this pseudo time evolution $\varphi(\gamma^q_k, \tau)$ can be viewed as the projection of the (non-pseudo) $s$-evolution $\varphi^\ddagger(\widehat{\gamma}^q_k, \tau)$, given by 
\begin{equation}
    \varphi^\ddagger (\widehat{\gamma}^q_k, \tau): \begin{bmatrix} x_s \\ t_s \\ s \end{bmatrix} \to \begin{bmatrix} x_s + \int_s^{s+\tau} \gamma^q_k(x_{u}, t_{u})\,du \\ t_{s + \tau} \\ s + \tau\end{bmatrix},
    \label{eq:s_evolution}
\end{equation}
onto $\mathcal{S}$. The equivalency follows from the fact that for $\widehat{\gamma}^q_k$, $t_{s+\tau'} = t_s$ for $\tau' \in [0, \tau]$. A similar statement can be made about the $t$-update $\gamma_t$ from Equation \ref{eq_splitting}. 

Denoting by $\proj: \mathcal{S'} \to \mathcal{S}$ the projection onto $\mathcal{S}$, we see that the splitting approximating using pseudo-time operators from Equation \ref{eq_splitting} can be rewritten as the projection onto $S$ of an analogous splitting approximation using non-pseudo $s$-evolution operators, viz.,

\begin{equation}\proj \varphi^\ddagger(\widehat{\gamma}, \tau) \approx \proj \left[\varphi^\ddagger(\widehat{\gamma}_t, \tau) \circ \varphi^\ddagger(\widehat{\gamma}^p_N, \tau)\circ \varphi^\ddagger(\widehat{\gamma}^q_N, \tau) \cdots \varphi^\ddagger(\widehat{\gamma}^p_0, \tau)\circ \varphi^\ddagger(\widehat{\gamma}^q_0, \tau)\right].\label{eq_aug_splitting}\end{equation}
\section{Derivation of Time Evolution Operators and Their Jacobians}
\label{sec:time_evo_derivations}
 \paragraph{Order Zero Terms.} For order $k=0$, recall that 
\[\gamma^q_0(x) = \begin{bmatrix}s_0^q(p,t)(q^{\otimes 0}) \\ 0 \end{bmatrix} = \begin{bmatrix}s_0^q(p,t) \\ 0\end{bmatrix},\] 
so that the operator $\varphi(\gamma_q^0, \tau)$ is given by
\begin{equation}\varphi(\gamma^q_0, \tau): \begin{bmatrix} q \\ p \\ t \end{bmatrix} \to \begin{bmatrix} q + \tau s_0^q(p,t)\\ p \\ t \end{bmatrix}\label{eq_vp}\end{equation}
with Jacobian $J_0^q$ given by 
\begin{equation}J_0^q = \begin{bmatrix} I_{d_q} & \tau (\frac{\partial s_0^q}{\partial p})^T & \tau (\frac{\partial s_0^q}{\partial t})^T\\ 0 & I_{d_p} & 0 \\ 0 & 0 & 1 \end{bmatrix}.\end{equation}
The analysis for $s_0^p$ is nearly identical, and we omit it. 
\paragraph{Order One Terms.} For $k=1$, we recall that 
\begin{equation}\gamma^q_1(x) = \begin{bmatrix}s_1^q(p,t)(q^{\otimes 1}) \\ 0 \\ 0\end{bmatrix} = \begin{bmatrix}s_1^q(p,t)^Tq \\ 0 \\ 0\end{bmatrix}.\end{equation}
Then the time evolution operator $\varphi(\gamma^q_1, \tau)$ is given by 
\begin{equation}\varphi(\gamma^q_1, \tau): \begin{bmatrix} q \\ p \\ t \end{bmatrix} \to \begin{bmatrix} \exp(\tau s_1^q(p,t))q\\ p \\ t \end{bmatrix}\label{eq_nvp}\end{equation}
and the Jacobian $J_1^q$ is simply given by 
\begin{equation}J^q_1 = \begin{bmatrix} \exp(\tau s_1^q(p,t)) & \cdots & \cdots\\ 0 & I_{d_p} & 0 \\ 0 & 0 & 1 \end{bmatrix}\label{eq_nvp_jac}\end{equation}
Then $\log \det(J_q^1) = \log\det(\exp(\tau a_1(p,t))) = \log \exp(\tr(\tau a_1(p,t)))=\tr(\tau a_1(p,t))$. 
\paragraph{Sparse Higher Order Terms.} For $k>1$, we consider only sparse tensors given by the simple dot product
\[\overline{s}_k^q(q ^{\otimes k}) = \sum_i\left(\overline{s}_k^q\right)_i q_i^k = \left(\overline{s}_k^q(q ^{\otimes k})\right)^T q^{\circ k}\]
where $q^{\circ k}$ denotes the element-wise $k$-th power of $q$. Then the $q$-component of time evolution operator $\overline{\gamma}_k^q$ is given component-wise by an ODE of the form $\frac{dq}{dt} = s_k^q(p,t)q^k$, whose solution is obtained in closed form via rearranging to the equivalent form
\[\int_{q_t}^{q_{t + \tau}} \frac{1}{\overline{s}_k^q(p,t)}q^{-k} \, dq = \int_{t}^{t+\tau} \, dt = \tau.\]
Then it follows that $q_{t+ \tau}$ is given component-wise by $(q_{t,i}^{1-k} + \tau \overline{s}_k^q(p,t)_i(1-k))^{\frac{1}{1-k}}$. Thus, the operator $\varphi(\overline{\gamma}_k^q,\tau)$ is given by
\begin{equation}
    \varphi(\overline{\gamma}_k^q,\tau): \begin{bmatrix} q \\ p \\ t \end{bmatrix} \to \begin{bmatrix} \left(q^{\circ (1-k)} +\tau \overline{s}_k^q(p,t)(1-k)\right)^{\circ\left(\frac{1}{1-k}\right)}\\ p \\ t \end{bmatrix}.
\end{equation}
The Jacobian is then given by 
\begin{equation}
    J^q_k = \begin{bmatrix} \diag\left(q^{-k}\left(q^{\circ (1-k)} +\tau \overline{s}_k^q(p,t)(1-k)\right)^{\circ\left(\frac{1}{1-k}-1\right)}\right) & \cdots & \cdots\\ 0 & I_{d_p} & 0 \\ 0 & 0 & 1 \end{bmatrix}\label{eq_nvp_jac}
\end{equation}
with $\log \det |J_k^q|$ given by
\[\log \det \diag\left|q^{\circ -k}\left(q^{\circ (1-k)} +\tau \overline{s}_k^q(p,t)(1-k)\right)^{\circ\left(\frac{k}{1-k}\right)}\right| = \sum_i \frac{k}{1-k}\log |q_i^{1-k} - \tau s_k^q(p,t)_i(1-k)| - k\log |q_i|.\]
\section{Explicit Descriptions of Taylor-Verlet Integrators}
\label{sec:higher_order_algo}
 Taylor-Verlet integrators are constructed using the splitting approximation given in Equation \ref{eq_approx} of an order $N$ Verlet flow $\gamma_\theta$, which we recall below as 
 \begin{equation}\varphi^{\ddagger}(\gamma, \tau) \approx \varphi(\gamma_t, \tau) \circ \textcolor{violet}{\varphi(\gamma^p_N, \tau)}\circ \textcolor{teal}{\varphi(\gamma^q_N, \tau)} \cdots \textcolor{violet}{\varphi(\gamma^p_0, \tau)}\circ \textcolor{teal}{\varphi(\gamma^q_0, \tau)}.\label{eq_approx2}\end{equation}

 The standard Taylor-Verlet integrator of an order $N$ Verlet flow $\gamma_\theta$ is given explicitly in Algorithm \ref{algo:higher_order_integrator} below.

\begin{algorithm}
\caption{Integration of order $N$ Verlet flow}\label{algo:higher_order_integrator}
\begin{algorithmic}[1] 
\Procedure{OrderNVerletIntegrate}{$q, p, t_0, t_1, \text{steps}, \gamma_\theta$, $N$}
    \State $\tau \gets \frac{t_1 - t_0}{\text{steps}}$, $t \gets t_0$ 
    \State $\Delta\log p = 0$ \Comment{Change in log density.}
    \State $s_0^q, s_0^p, \dots s_N^q, s_N^p \gets \gamma_\theta$
    \While{$t < t_1$}
        \State $k \gets 0$
        \While{$k \le N$}
            \State \textcolor{teal}{$q \gets \varphi(\gamma_k^{q;\theta},\tau)$ \Comment{$q$-update.}}
            \State \textcolor{teal}{$\Delta\log p \gets \Delta\log p - \log \det J\varphi(\gamma_k^{q;\theta}, \tau)$}
            \State \textcolor{violet}{$p \gets \varphi(\gamma_k^{p;\theta},\tau)$\Comment{$p$-update.}}
            \State \textcolor{violet}{$\Delta\log p \gets \Delta\log p - \log \det J\varphi(\gamma_k^{p;\theta}, \tau)$}
            \State $k \gets k+1$
        \EndWhile
    \State $t \gets t + \tau$
    \EndWhile
    \State $\textbf{return}\,\,q, p, \Delta\log p$
\EndProcedure
\end{algorithmic}
\end{algorithm}
Closed-form expressions for the time evolution operators $\gamma_k^{q;\theta},\tau)$ and log density updates  $\log \det J\varphi(\gamma_k^{q;\theta}, \tau)$ can be found in Table \ref{fig:time_evo_table}. Algorithm $\ref{algo:order_one_integrator}$ details explicitly standard Taylor-Verlet integration of an order one Verlet flow.
\begin{algorithm}
\caption{Integration of order one Verlet flow}\label{algo:order_one_integrator}
\begin{algorithmic}[1] 
\Procedure{OrderOneVerletIntegrate}{$q, p, t_0, t_1, \text{steps}, \gamma_\theta$}
    \State $\tau \gets \frac{t_1 - t_0}{\text{steps}}$, $t \gets t_0$
    \State $\Delta\log p = 0$ \Comment{Change in log density.}
    \State $s_0^q, s_0^p, s_1^q, s_1^p \gets \gamma_\theta$
    \While{$t < t_1$}
        \State \textcolor{teal}{$q \gets q + \tau s_0^q(p,t; \theta),$ \Comment{Apply \eqref{eq_vp}}}
        \State \textcolor{violet}{$p \gets p + \tau s_0^p(q,t;\theta)$ \Comment{Apply \eqref{eq_vp}}}
        \State \textcolor{teal}{$q \gets \exp(\tau s_1^q(p,t; \theta))q $\Comment{Apply \eqref{eq_nvp}}}
        \State \textcolor{teal}{$\Delta\log p\gets \Delta\log p - \tr(\tau s_1^q(p,t; \theta))$ \Comment{Apply \eqref{eq_nvp_jac}}}
        \State \textcolor{violet}{$p \gets \exp(\tau s_1^p(q,t; \theta))p$\Comment{Apply \eqref{eq_nvp}}}
        \State \textcolor{violet}{$\Delta\log p\gets \Delta\log p - \tr(\tau s_1^p(q,t; \theta))$ \Comment{Apply \eqref{eq_nvp_jac}}}
        \State $t \gets t + \tau$
    \EndWhile
    \State $\textbf{return}\,\,q, p, \Delta\log p$
\EndProcedure
\end{algorithmic}
\end{algorithm}

\section{Realizing Coupling Architectures as Verlet Integrators}
\label{sec:generalized_couplings}
In this section, we will show that two coupling-based normalizing flow architectures - NICE (\cite{dinh2014nice}) and RealNVP (\cite{realnvp}) - can be realized as the Taylor-Verlet integrators for zero and first order Verlet flows respectively. Specifically, for each such coupling layer architecture $f_\theta$, we may construct a Verlet flow $\gamma_\theta$ whose Taylor-Verlet integrator is given by successive applications of $f_\theta$.
\paragraph{Additive Coupling Layers} The additive coupling layers of NICE involve updates of the form
\begin{align*}
    f_\theta^q(q, p) &= \concat(q + t^q_\theta(p), p), \\
    f_\theta^p(q, p) &= \concat(q, p + t^p_\theta(q)).
\end{align*}
Now consider the order zero Verlet flow $\gamma_\theta$ given by 
\begin{equation*}
    y_\theta = \frac{1}{\tau}\begin{bmatrix}
        \tilde{t}_\theta^q(p,t) \\
        \tilde{t}_\theta^p(q,t)
    \end{bmatrix},
\end{equation*}
where $\tilde{t}_\theta^q(x,t) \triangleq t_\theta^q(x)$ and $\tilde{t}_\theta^p(x,t) \triangleq t_\theta^p(x)$. Then the standard Taylor-Verlet integrator with step size $\tau$ is given by the splitting approximation 
\begin{equation*}\varphi^\ddagger(\gamma_\theta, \tau) \approx \varphi(\gamma_t, \tau) \circ \varphi(\gamma_p^{0;\theta}, \tau) \circ \varphi(\gamma_q^{0;\theta}, \tau)\end{equation*} with updates given by 
\begin{equation*}
    \varphi(\gamma_q^{0;\theta}, \tau): \begin{bmatrix}
        q \\ p
    \end{bmatrix}
    \to
    \begin{bmatrix}
        q + (\tau)\left(\frac{1}{\tau}\tilde{t}_\theta^q(p,t)\right) \\
        p
    \end{bmatrix}  = 
    \begin{bmatrix}
        q + t_\theta(p) \\
        p
    \end{bmatrix}
\end{equation*}
and 
\begin{equation*}
    \varphi(\gamma_p^{0;\theta}, \tau): \begin{bmatrix}
        q \\ p
    \end{bmatrix}
    \to
    \begin{bmatrix}
        q \\
        p + (\tau)\left(\frac{1}{\tau}\tilde{t}_\theta^p(q,t)\right) 
    \end{bmatrix}  = 
    \begin{bmatrix}
        q \\
        p + t_\theta(q) \\
    \end{bmatrix}.
\end{equation*}
Thus, $f_\theta^q = \varphi(\gamma_q^{0;\theta}, \tau)$ and $f_\theta^q = \varphi(\gamma_q^{0;\theta}, \tau)$. 
\paragraph{RealNVP} The coupling layers of RealNVP are of the form 
\begin{align*}
    f_\theta^q(q, p) &= \concat(q \odot \exp(s^q_\theta(p)) + t_\theta^q(p), p), \\
    f_\theta^p(q, p) &= \concat(q, p \odot \exp(s_\theta^p(q)) + t_\theta^p(q).
\end{align*}
Now consider the first order Verlet flow $\gamma_\theta$ given by 
\begin{equation*}
    \gamma_\theta = \begin{bmatrix}
        \tilde{t}_\theta^q + \left(\tilde{s}_\theta^q\right)^Tq \\
        \tilde{t}_\theta^p + \left(\tilde{s}_\theta^p\right)^Tp
    \end{bmatrix},
\end{equation*}
where $\tilde{s}_\theta^q(p,t) \coloneqq \tfrac{1}{\tau}\diag(s_\theta^q(p))$, 
\begin{equation*}
    \tilde{t}_\theta^q(p,t) \coloneqq \frac{t_\theta^q(p)}{\tau\exp(\tau\tilde{s}_\theta^q(p))},
\end{equation*}
and $\tilde{s}_\theta^p$ and $\tilde{t}_\theta^p$ are defined analogously. Then a non-standard Taylor-Verlet integrator is obtained from the splitting approximation
\begin{equation*}\varphi^\ddagger(\gamma_\theta, \tau) \approx \varphi(\gamma_t, \tau) \circ \varphi(\gamma_p^{1;\theta}, \tau) \circ \varphi(\gamma_p^{0;\theta}, \tau) \circ 
\varphi(\gamma_q^{1;\theta},\tau) \circ \varphi(\gamma_q^{0;\theta}, \tau)\end{equation*}
where the order has been rearranged from that of Equation \ref{eq_approx} to group together the $\gamma^q$ and $\gamma^p$ terms. The time evolution operators $\varphi(\gamma_q^{0;\theta}, \tau)$ and $\varphi(\gamma_q^{1;\theta}, \tau)$ are given by 
\begin{equation*}
    \varphi(\gamma_q^{0;\theta}, \tau): \begin{bmatrix}
        q \\ p
    \end{bmatrix}
    \to 
    \begin{bmatrix}
        q + \tau\tilde{t}_\theta^q(p,t) \\
        p
    \end{bmatrix} = 
    \begin{bmatrix}
        q + \frac{t_\theta^q(p)}{\exp(\tau\tilde{s}_\theta^q(p,t))} \\
        p
    \end{bmatrix}
\end{equation*}
and
\begin{equation*}
    \varphi(\gamma_q^{1;\theta}, \tau): \begin{bmatrix}
        q \\ p
    \end{bmatrix}
    \to 
    \begin{bmatrix}
        \exp(\tau \tilde{s}_\theta^q(p,t))^Tq \\
        p
    \end{bmatrix}.
\end{equation*}
So that the combined $q$-update $\varphi(\gamma_q^{1;\theta},\tau) \circ \varphi(\gamma_q^{0;\theta}, \tau)$ is given by
\begin{equation*}
    \varphi(\gamma_q^{1;\theta},\tau) \circ \varphi(\gamma_q^{0;\theta}, \tau): \begin{bmatrix}
        q \\ p
    \end{bmatrix} \to
    \begin{bmatrix}
        \exp(\tau \tilde{s}_\theta^q(p,t))^T q + t_\theta^q(p) \\ p
    \end{bmatrix} = 
    \begin{bmatrix}
        \exp(\diag(s_\theta^q(p))^Tq + t_\theta^q(p) \\ p    
    \end{bmatrix} 
\end{equation*}
which reduces to 
\begin{equation*}
    \begin{bmatrix}
        q\odot \exp(s_\theta^q(p)) + t_\theta^q(p) \\ p
    \end{bmatrix} = \concat(q \odot \exp(s^q_\theta(p)) + t_\theta^q(p), p) = f_\theta^q(q,p).
\end{equation*}
Thus, $f_\theta^q(q,p) = \varphi(\gamma_q^{1;\theta},\tau) \circ \varphi(\gamma_q^{0;\theta}, \tau)$, and similarly, $f_\theta^p(q,p) = \varphi(\gamma_p^{1;\theta},\tau) \circ \varphi(\gamma_p^{0;\theta}, \tau)$.

Strictly speaking, Taylor-Verlet integrators cannot be said to completely generalize these coupling-based architectures because Verlet flows operate on a fixed, canonical partition of dimensions, whereas coupling-based architectures commonly rely on different dimensional partitions in each layer.



\end{document}